\newcommand{\cmark}{\ding{51}}%
\newcommand{\xmark}{\ding{55}}%
\newtheorem{lemma}{Lemma}
\title{Rethinking Learnable Tree Filter for \\ Generic Feature Transform}
\author{Lin Song$^{1}$ 
\quad Yanwei Li$^2$ \quad Zhengkai Jiang$^3$ \quad Zeming Li$^4$ \quad Xiangyu Zhang$^4$ \\ {\bf \quad Hongbin Sun$^1$\thanks{Corresponding author.} \quad Jian Sun$^4$ \quad Nanning Zheng$^1$} \\
$^1$ College of Artificial Intelligence, Xi'an Jiaotong University \\
$^2$ The Chinese University of Hong Kong \\
$^3$ Institute of Automation, Chinese Academy of Sciences \\
$^4$ Megvii Inc. (Face++)\\
stevengrove@stu.xjtu.edu.cn, ywli@cse.cuhk.edu.hk, jiangzhengkai2017@ia.ac.cn, \\ \{hsun, nnzheng\}@mail.xjtu.edu.cn, \{lizeming, zhangxiangyu, sunjian\}@megvii.com}
\begin{document}

\maketitle

\begin{abstract}
The Learnable Tree Filter presents a remarkable approach to model structure-preserving relations for semantic segmentation. Nevertheless, the intrinsic geometric constraint forces it to focus on the regions with close spatial distance, hindering the effective long-range interactions. To relax the geometric constraint, we give the analysis by reformulating it as a Markov Random Field and introduce a learnable unary term. Besides, we propose a learnable spanning tree algorithm to replace the original non-differentiable one, which further improves the flexibility and robustness. With the above improvements, our method can better capture long-range dependencies and preserve structural details with linear complexity, which is extended to several vision tasks for more generic feature transform. Extensive experiments on object detection/instance segmentation demonstrate the consistent improvements over the original version. For semantic segmentation, we achieve leading performance (82.1\% mIoU) on the Cityscapes benchmark without bells-and-whistles. Code is available at \href{https://github.com/StevenGrove/LearnableTreeFilterV2}{https://github.com/StevenGrove/LearnableTreeFilterV2}.
\end{abstract}

\section{Introduction}

In the last decade, the vision community has witnessed the extraordinary success of deep convolutional networks in various vision tasks~\cite{chen2014semantic, landrieu2018large, simonyan2014very, szegedy2015going,song2019tacnet,li2020learning,zhang2019glnet,song2020fine}.
However, in deep convolutional layers, the distribution of impact within an effective receptive field is found to be limited to a local region and converged to the gaussian~\cite{luo2016understanding}, which brings difficulties to the long-range dependencies modeling.
To address this problem, numerous local-based approaches~\cite{chen2017rethinking, chen2018encoder, dai2017deformable} have been proposed to increase the receptive region of convolutional kernels by using pooling~\cite{zhao2017pyramid} or dilated operations~\cite{chen2017deeplab}. Meanwhile, various global-based approaches~\cite{wang2018non, huang2019ccnet, hu2018relation, zhang2019latentgnn, cao2019gcnet} have been explored to aggregate features by modeling the pairwise relations based on the visual attention mechanism. However, there is still a conflict between long-range dependencies modeling and object details preserving.

\begin{figure}[ht]
\includegraphics[width=0.9\textwidth]{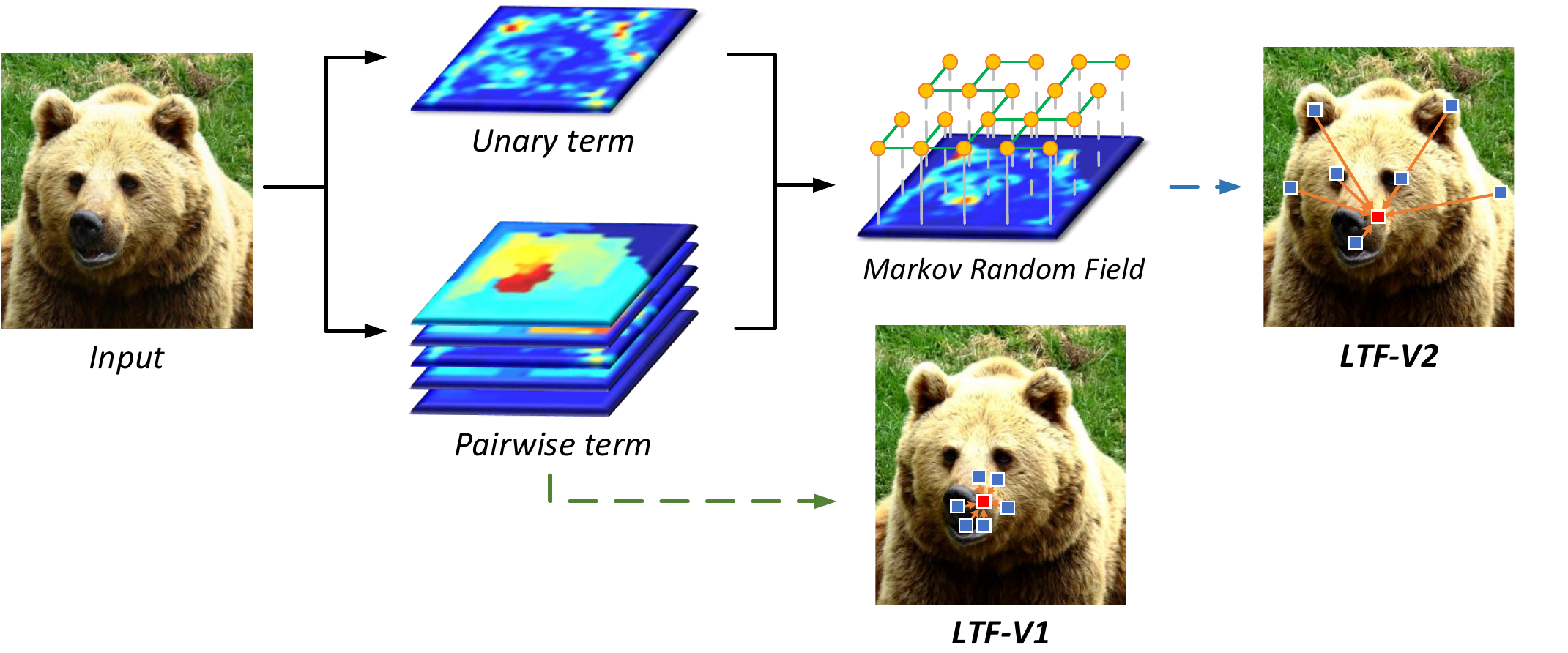}
\centering
\caption{The comparison of the aggregation processes between the LTF-V1 module~\cite{song2019learnable} and the proposed LTF-V2 module. One of the sink nodes is marked by the \textit{red} square, and the corresponding source nodes with the {\em highest} filtering weights are presented as the \textit{blue} squares.
Limited by the intrinsic geometric constraint, the LTF-V1 can only give priority to the regions with close spatial distance.
Different from it, our proposed LTF-V2 is more generic by adding a learnable unary term, which relaxes the geometric constraint and allows the filter to focus on the distant regions of interest.}
\label{fig:vis}
\end{figure}

Recently, the learnable tree filter module~\cite{song2019learnable} (LTF-V1 module) tries to bridge this gap by performing feature aggregation on a minimum spanning tree.
Since the minimum spanning tree is generated by the low-level guided features, it can retain the structural details with efficiency.
Nevertheless, the {\em geometric constraint} and the {\em construction process} in the LTF-V1 module are found to be its Achilles' heels, which impede the usage for more generic feature transform.
Firstly, the interactions with distant nodes along the spanning tree need to pass through nearby nodes, which brings the intrinsic {\em geometric constraint} to the tree filter. As illustrated in Fig.~\ref{fig:vis}, this property forces the LTF-V1 module to focus on the nearby regions. In addition, the {\em construction process} of the minimum spanning tree is non-differentiable, which is overly sensitive to the quality of guided features and prevents the LTF-V1 module from entirely learnable.



To remedy the shortages mentioned above, we rethink the learnable tree filter from the perspective of Markov Random Field~\cite{li1994markov} (MRF) and present the {\em Learnable Tree Filter V2 Module} (LTF-V2 module) for more generic feature transform. Specifically, it complements the LTF-V1 module by introducing a {\em learnable unary term} and a {\em learnable spanning tree algorithm}. 
The former one provides a modulation scalar~\cite{poggi2017quantitative, tosi2018beyond, park2015leveraging} for each node, which can relax the geometric constraint and enable effective long-range interactions.
Intuitively, as shown in the Fig.~\ref{fig:vis}, the proposed unary term guides the LTF-V2 to focus on the coarsely distant regions of interest, and the tree-based pairwise term further refines the regions to fit original structures, bringing powerful semantic representations.
Meanwhile, the proposed learnable spanning tree algorithm offers a simple and effective strategy to create a gradient tunnel between the spanning tree process and the feature transform module.
This algorithm enables the LTF-V2 module to be more robust and flexible.
Moreover, the LTF-V2 module maintains linear computational complexity and highly empirical efficiency of the LTF-V1~\footnote{The runtime benchmark is provided in the supplementary material.\label{runtime}}.

Overall, in this paper, we present a more generic method for structure-preserving feature transform with linear complexity.
To demonstrate the effectiveness of the LTF-V2 module, we conduct extensive ablation studies on object detection, instance segmentation, and semantic segmentation.
Both quantitative and qualitative experiments demonstrate the superiority of the LTF-V2 module over previous approaches.
Even compared with other state-of-the-art works, the LTF-V2 module achieves competitive performance with much less resource consumption.
Specifically, when applied to Mask R-CNN~\cite{he2017mask} (with ResNet50-FPN~\cite{he2016deep}), the LTF-V2 module obtains 2.4\% and 1.8\% absolute gains over the baseline on COCO benchmark for $\rm{AP}^{box}$ and $\rm{AP}^{seg}$ respectively, with negligible computational overhead.
Meanwhile, the LTF-V2 module achieves \textbf{82.1}\% mIoU on Cityscapes benchmark without bells-and-whistles, reaching leading performance on semantic segmentation.

\section{Related Work}
Since each neuron has limited effective receptive field, the deep convolutional networks~\cite{chen2014semantic, simonyan2014very, he2016deep} typically fail to capture long-range dependencies.
Recently, to alleviate the problem, a large number of approaches have been proposed to model long-range context in existing deep convolutional models.
These approaches can be generally divided into two categories, namely local-based and global-based.

The local-based approaches aim to aggregate the long-range context by increasing the local receptive region of each neuron, {\em e.g.}, pooling operation and dilated convolution.
He \textit{et al.}~\cite{he2015spatial} and Zhao \textit{et al.}~\cite{zhao2017pyramid} propose a spatial pyramid pooling for object detection and semantic segmentation, respectively.
Yu \textit{et al.}~\cite{yu2015multi} introduces the dilated convolution to enlarge the receptive field of each convolution kernel explicitly.
Dai \textit{et al.}~\cite{dai2017deformable} further modifies the dilated convolution to a more generic form by replacing the grid kernels with the deformable ones.
However, the modeling of pairwise relation by these methods is typically restricted to a local region and relies on the homogeneous prior.

The global-based approaches are mainly based on the attention mechanism, which is firstly applied in machine translation~\cite{vaswani2017attention} as well as physical system modeling~\cite{battaglia2016interaction} and then extended to various vision tasks~\cite{wang2018non}.
SENet~\cite{yu2018bisenet}, PSANet~\cite{zhao2018psanet}, and GENet~\cite{hu2018gather} bring channel-wise relations to the network by performing down-sampling and attention in different channels.
Non-Local~\cite{wang2018non} adopts the self-attention mechanism in the spatial domain to aggregate related features by generating the affinity matrix between each spatial node. 
It can model non-local relations but suffers from highly computational complexity.
LatentGNN~\cite{zhang2019latentgnn} and CCNet~\cite{huang2019ccnet} are proposed to alleviate this problem by projecting features into latent space and stacking two criss-cross blocks, respectively.

Nevertheless, with the expansive receptive field, the detailed structure could not be preserved in the above methods.
Our method bridges this gap by taking advantages of structure-aware spanning trees and global modeling of Markov Random Field.



\section{Method}
In this section, we theoretically analyze the deficiency of the LTF-V1 by reformulating it as a Markov Random Field and give a solution, namely the LTF-V2.
Besides, we propose a learnable spanning tree algorithm to improve robustness and flexibility.
With these improvements, we further present a new learnable tree filter module, called LTF-V2 module.

\subsection{Notation and Problem Definition }
Given an input feature map ${X}=\{x_i\}^N$ with $x_i\in \mathbb{R}^{1\times C}$ and the corresponding guidance ${G}=\{g_i\}^N$ with $g_i\in \mathbb{R}^{1\times C}$, where $N$ and $C$ indicate the number of input pixels and encoding channels, respectively. 
Specifically, the guidance ${G}$ gives the positions where the filter needs to preserve detailed structures.
Following the original configuration~\cite{yang2015stereo,song2019learnable}, we represent the topology of the guidance ${G}$ as a 4-connected planar graph, \textit{i.e.}, $\mathcal{G}=\{\mathcal{V}, \mathcal{E}\}$, where $\mathcal{V}=\{\mathcal{V}_i\}^N$ is the set of nodes and $\mathcal{E}=\{\mathcal{E}_i\}^N$ is the set of edges.
The weight of the edge reflects the feature distance between adjacent nodes.
Our goal is to obtain the refined feature map ${Y}$ by transforming the input feature map ${X}$ with the property of guided graph $\mathcal{G}$, where the dimension of ${Y}$ is same with that of ${X}$.
\subsection{Revisiting the Learnable Tree Filter V1}
Recently, a learnable tree filter module~\cite{song2019learnable} (LTF-V1 module) has been proposed as a flexible module to embed in deep neural networks.
The procedure of the LTF-V1 module can be divided into two steps (the visualization is provided in the supplementary material).

First, a tree-based sparse graph $\mathcal{G}_T$ is sampled from the input 4-connected graph $\mathcal{G}$ by using the minimum spanning tree algorithm~\cite{kruskal1956shortest}, as shown in Eq.~\ref{eq:mst}.
The pruning of the edges $\mathcal{E}$ gives priority to removing the edge with a large distance so that it can smooth out high-contrast and fine-scale noise while preserving major structures.
\begin{equation}
\label{eq:mst}
\mathcal{G}_T\sim \mathrm{MinimumSpanningTree}(\mathcal{G}).
\end{equation}
Second, we iterate over each node, taking it as the root of the spanning tree $\mathcal{G}_T$ and aggregating the input feature ${X}$ from other nodes.
For instance, the output of the node $i$ can be formulated as Eq.~\ref{eq:tf}, where $z_i$ is the normalizing coefficient and ${E}_{j,i}$ is the set of edges in the path from node $j$ to node $i$.
Besides, $\omega_{k,m}$ indicates the edge weight between the adjacent nodes $k$ and $m$. $S_{\mathcal{G}_T}(\cdot)$ accumulate the edge weights along a path to obtain the \textit{filtering weight}, \textit{e.g.}, $\frac{1}{z_i}S_{\mathcal{G}_T}(\cdot)$ for the node $i$.
\begin{equation}
\label{eq:tf}
y_i=\frac{1}{z_i} \sum_{\forall j \in \mathcal{V}} {S_{\mathcal{G}_T}\left({E}_{j,i}\right)x_j},
\quad {\rm{where}}\ S_{\mathcal{G}_T}({E}_{j,i})=\exp (-\sum_{\forall (k,m)\in {E}_{j,i}}{\omega_{k,m}}).
\end{equation}
In the original design~\cite{song2019learnable}, the edge weight is instantiated as the euclidean distance between features of adjacent nodes, \textit{e.g.}, $\omega_{k,m}=|x_k-x_m|^2$. 
In addition, due to the tree structure is acyclic, we can use dynamic programming algorithms to improve computational efficiency significantly.

\subsection{Learnable Tree Filter V2}
\label{sec:ab}
Extensive experiments~\cite{song2019learnable} demonstrate the effectiveness of LTF-V1 module on semantic segmentation.
However, the performance of the LTF-V1 module in the instance-aware tasks is unsatisfactory, which is inferior to some related works (\textit{e.g.}, LatentGNN~\cite{zhang2019latentgnn} and GCNet~\cite{cao2019gcnet}).
The details are presented in Tab.~\ref{tab:det_stage}.
We speculate that the reason can be attributed to the intrinsic geometric constraint.
In this section, we try to analyze this problem and give the solution.

\textbf{Modeling.}
According to the Eq.~\ref{eq:tf}, we can consider the LTF-V1 as the statistical expectation of the sampling from input features under a specific distribution. The distribution is calculated by using the spanning tree.
To be more generic, we first define a set of random latent variables as $\mathbf{H}=\{h_i\}^N$ with $h_i\in \mathcal{V}$.  
And we give a more generic form of feature transform, which is elaborated in Eq.~\ref{eq:y_i}.
For instance, when the probability distribution $P_{\mathcal{G}_T}$ is set to the filtering weight of the LTF-V1, it will be equivalent to the LTF-V1.
\begin{equation}
\label{eq:y_i}
y_i=\mathbb{E}_{h_i\sim P_{\mathcal{G}_T}}[x_{h_i}]=\sum_{\forall j\in \mathcal{V}}{P_{\mathcal{G}_T}(h_i=j) x_j}.
\end{equation}
Moreover, we adopt the Markov Random Field (MRF)~\cite{li1994markov} to model the distribution $P_{\mathcal{G}_T}$, which is a powerful tool to model the joint distribution of latent variables by defining both unary and pairwise terms.
The formulation of the MRF is shown in Eq.~\ref{eq:p_h}, where $\mathbf{O}$ is the set of random observable variables associating with the input feature. $\phi_i$ and $\psi_{i,j}$ represent unary and pairwise terms of the MRF, respectively. Besides, ${Z}$ indicates the partition function.
\begin{equation}
\label{eq:p_h}
P_{\mathcal{G}_T}(\mathbf{H}|\mathbf{O}={X})=\frac{1}{{Z}}\prod_{\forall i \in \mathcal{V}}{\phi_i(h_i, x_i)}\prod_{\forall (i, j)\in \mathcal{E}}{\psi_{i,j}(h_i, h_j)}.
\end{equation}

\begin{figure}[t]
\centering 
\subfigure[Network Architecture]
{                    
\begin{minipage}[b]{0.5\textwidth}
\centering                                         
\includegraphics[width=\textwidth]{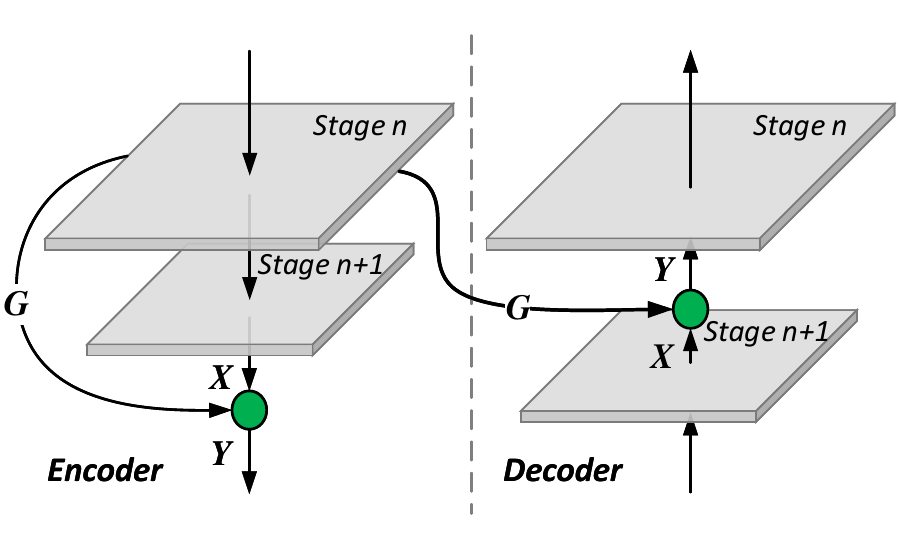}
\end{minipage}
}
\subfigure[Learnable Tree Filter V2 Module]
{                    
\begin{minipage}[b]{0.45\textwidth}
\centering                                         
\includegraphics[width=\textwidth]{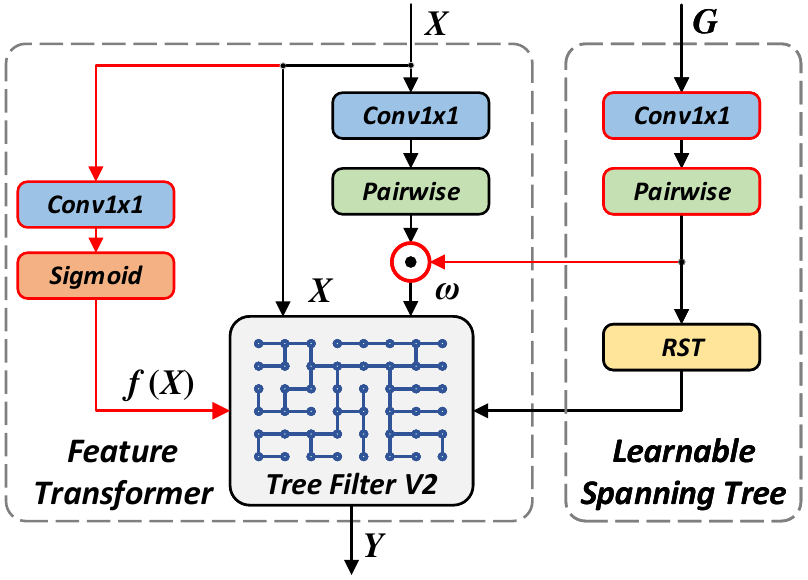}
\end{minipage}
}
\caption{The diagram of the network architecture and the proposed Learnable Tree Filter V2 module. The \textit{green} circles in (a) denote the LTF-V2 modules, which generate the spanning tree with the low-level feature (\textit{i.e.}, “Stage n”) and then transform the high-level feature (\textit{i.e.}, “Stage n+1”). We use \textit{red} in (b) to represent the newly added components compared with the LTF-V1 module. 
$f({X})$ and $\mathbf{\omega}$ are the elements in unary and pairwise terms, respectively. ``RST'' denotes the proposed close random spanning tree algorithm. And ``Pairwise'' indicates calculating the distance for each edge.}
\label{fig:tf_v2}
\end{figure}

\textbf{Analysis.} 
First of all, we analyze why the LTF-V1 module works well on semantic segmentation.
By comparing the formulas, we give a specific MRF on tree $\mathcal{G}_T$, which is formally equivalent to the LTF-V1 (the proof is provided in the supplementary material).
Specifically, the formulation is shown as Eq.~\ref{eq:phi} and Eq.~\ref{eq:psi}, where $\delta(\cdot)$ denotes the unit impulse function and ${\rm Desc}_{\mathcal{G}_T}(i,j)$ is the descendants of node $j$ when node $i$ is the root of tree $\mathcal{G}_T$.

\begin{equation}
\label{eq:phi}
\phi_i(h_i, x_i)\equiv 1,
\end{equation}
\begin{equation}
\label{eq:psi}
\psi_{i,j}(h_i, h_j):=\left\{\begin{matrix}
\delta(h_i - h_j) & h_i\notin 
{\rm Desc}_{\mathcal{G}_T}(i, j) \\ 
{\rm exp}(-\omega_{i,j}) \delta(h_i - h_j) & h_i\in {\rm Desc}_{\mathcal{G}_T}(i, j)
\end{matrix}\right.
\end{equation}
The MRF is demonstrated to preserve the local structure by modeling data-dependent pairwise terms between adjacent nodes in previous works~\cite{li1994markov, shi2000normalized, boykov2001fast, sun2003stereo, felzenszwalb2006efficient}.
This property of MRF is extremely desired by semantic segmentation~\cite{chen2018deeplab, liu2017deep, zheng2015conditional, yu2018learning}.
Furthermore, combined with the visualizations with rich details~\cite{song2019learnable}, we consider the reason for the effectiveness of LTF-V1 on semantic segmentation lies in modeling learnable pairwise terms.


We further analyze why the LTF-V1 module performs unsatisfactorily in the instance-aware tasks.
The effective long-range dependencies modeling, which provides powerful semantic representations, is proved to be crucial for instance-aware tasks~\cite{hu2018relation, wang2018non, zhang2019latentgnn, huang2019ccnet}.
However, due to the intrinsic geometric constraint of the LTF-V1, as illustrated in Fig.~\ref{fig:vis}, interactions with distant nodes need to pass through nearby nodes.
Meanwhile, as shown in Eq.~\ref{eq:phi}, the LTF-V1 models the unary term as a constant scalar, resulting in the filtering weight monotonously decreases as the distance increase along a path (the proof is provided in the supplementary material).
Therefore, these properties force the LTF-V1 to focus on the \textbf{nearby region}, leading to the difficulty of long-range interactions and unsatisfactory performance in instance-aware tasks.


\textbf{Solution.} 
Eventually, we try to give a solution to address the problem.
To focus on the distant region, we need to relax the geometric constraint and allow the filtering weight of distant nodes to be larger than that of close ones.
In this paper, motivated by confidence estimation methods~\cite{poggi2017quantitative, tosi2018beyond, park2015leveraging}, we present a learnable unary term which can meet the requirements (the proof is provided in the supplementary material).
%
The formulation is shown on Eq.~\ref{eq:f_x}, where $f(\cdot)$ denotes a unary function embedded in the deep neural network and $\beta$ is a learnable parameter.
Intuitively, for a node, $f(\cdot)$ reflects the confidence of its input feature, while $\beta$ is the potential for choosing other features.
\begin{equation}
\label{eq:f_x}
\phi_i(h_i, x_i) := \left\{\begin{matrix}
f(x_i) & h_i=i\\ 
{\rm exp}(-\beta) & h_i\neq i
\end{matrix}\right.
\end{equation}
Let Eq.~\ref{eq:f_x} and Eq.~\ref{eq:psi} substitute into Eq.~\ref{eq:p_h}.
Since the tree $\mathcal{G}_T$ is a acyclic graph, the closed-form solution of Eq.~\ref{eq:y_i} can be derived as Eq.~\ref{eq:tf_v2} by using the belief propagation algorithm~\cite{yedidia2001generalized}. $z_i$ is the normalizing coefficient and $|{E}_{j,i}|$ denotes the number of edges in the path.
\begin{equation}
\label{eq:tf_v2}
\begin{aligned}
&y_i=\frac{1}{z_i} \sum_{\forall x_j \in {X}} {S_{\mathcal{G}_T}({E}_{j,i}){{\rm exp}(-\beta)}^{|{E}_{j,i}|}f(x_j)x_j}.\\
\end{aligned}
\end{equation}
Accordingly, we define the Eq.~\ref{eq:tf_v2} as a more generic form of the learnable tree filter, namely the \textbf{Learnable Tree Filter V2 (LTF-V2)}.
The unary and pairwise terms in the LTF-V2 are complementary and promoting to each other.
The unary term on Eq.~\ref{eq:f_x} allows the LTF-V2 to focus on the \textbf{distant region}, bringing more effective long-range interactions. Meanwhile, the data-dependent pairwise term can further refine the distant region to fit detailed structures.

\subsection{Learnable Random Spanning Tree}
\label{sec:lrst}
\begin{center}
\begin{algorithm*}[t]
\label{alg:rst}
\caption{Close random spanning tree}
\LinesNumbered
\KwIn{A 4-connected graph $\mathcal{G}$.}
\KwOut{Random spanning tree $\mathcal{G}_T$.}
$\mathcal{G}_T\leftarrow \emptyset$.\\
\For{$e\in E(\mathcal{G})$}
{
    $l(e)\leftarrow e.$ \Comment{Initialize a label for each edge.}
}
\While{$|V(\mathcal{G})| > 1$}
{
    \For{$v_i \in V(\mathcal{G})$}
    {
        \textcolor{blue}{$e_i\sim E_{\mathcal{G}}(v_i)$.\Comment{Sample an incident edge.}}\\
        $\mathcal{G}_T\leftarrow \mathcal{G}_T\cup \{l(e_i)\}$.
    }
    $\mathrm{Contract}(E(\mathcal{G}))$.\Comment{Contraction algorithm.}\\
    $\mathrm{Flatten}(\mathcal{G})$.\Comment{Remove loops and parallel edges.}
}
\Return $\mathcal{G}_T$.
\end{algorithm*}
\end{center}
Although the LTF-V1 module takes the first step to make the tree filtering process~\cite{yang2015stereo} trainable, the inside minimum spanning tree algorithm is still not differentiable.
This problem prevents it from being entirely learnable. 
Moreover, the topology of the spanning tree is determined by guided features only.
Therefore, the initialization and source selection of guided features could significantly impact the performance in the original design.
In this paper, we bridge this gap by proposing a simple architecture and a {\em close random-spanning-tree} algorithm, which is briefly introduced in Fig.~\ref{fig:tf_v2}.

Firstly, we propose a simple strategy to make the spanning tree process learnable.
As shown in the right dashed rectangle of Fig.~\ref{fig:tf_v2}(b), we calculate the joint affinities by performing the element-wise production on pairwise similarities, which are generated from the input feature ${X}$ and the guided feature ${G}$, respectively.
And then, we use the joint affinities as the edge weights $\omega$ for the feature transformer.
This strategy creates a gradient tunnel between the guided feature and the output feature ${Y}$ to make the guided feature trainable utilizing the back-propagation algorithm~\cite{lecun2015deep}.

Moreover, a close random spanning tree algorithm is designed to replace the original minimum spanning tree in the training phase.
As illustrated in Alg.~\ref{alg:rst}, the proposed algorithm is a modification of the Boruvka's algorithm~\cite{eppstein1999spanning}, which replaces the minimum selection to the stochastic sampling (line 6 in Alg.~\ref{alg:rst}) according to the edge weights.
This algorithm has the ability to regularize the network and avoid falling into local optima, which is resulted from the greedy strategy of the minimum spanning tree.
For the reason that the input $\mathcal{G}$ is a 4-connected planar graph, the computational complexity of the proposed algorithm can be reduced to linear on the number of pixels by using the edge contraction operation~\cite{gross2005graph} (line 8 in Alg.~\ref{alg:rst}).
In the training phase, the weights of selected edges as well as the sampling distribution will be optimized.
Besides, in the inference phase, we still use the minimum spanning tree to keep the results deterministic.

\subsection{Network Architecture}

Based on the algorithms in Sec.~\ref{sec:ab} and Sec.~\ref{sec:lrst}, we propose a generic feature transform module, namely the Learnable Tree Filter V2 module (\textit{LTF-V2 module}).
As illustrated in Fig.~\ref{fig:tf_v2}(b), the LTF-V2 module is composed of a feature transformer and a learnable spanning tree process.
To highlight the effectiveness of the LTF-V2 module, we adopt a simple embedding operator to instantiate the function $f(\cdot)$ in the unary term, \textit{i.e.}, $f(x_i)=\mathrm{Sigmoid}(\pi {x_i}^\top)$, where $\pi \in\mathbb{R}^{1\times C}$.
In addition, following the design of the LTF-V1 module, we instantiate the pairwise function $S_{\mathcal{G}_T}(\cdot)$ of edge weights as the \textit{Euclidean distance} and adopt the \textit{grouping strategy} for the edge weights and the unary term. 
Specifically, the grouping strategy~\cite{xie2017aggregated} is adopted to split the input feature into specific groups along the channel dimension and aggregate them with different weights. The default group number is set to 16, and the detailed comparison is provided in the supplementary material.

Due to the linear computational complexity, the LTF-V2 module is highly efficient and can be easily embedded as a learnable layer into deep neural networks.
To this end, we propose two usages of the module for the encoder and the decoder, respectively.
The usage for the encoder is shown as the left part of Fig.~\ref{fig:tf_v2}(a), which embeds the LTF-V2 module between adjacent stages of the encoder. 
In this way, the resize operation is used on the low-level feature (\textit{e.g.}, the ``Stage n" in Fig.~\ref{fig:tf_v2}(a)) when the dimensions of the low-level feature and the high-level feature (\textit{e.g.}, the ``Stage n+1" in Fig.~\ref{fig:tf_v2}(a)) are inconsistent. 
Furthermore, the LTF-V2 module adopts the resized low-level feature as a guided feature ${G}$ to generate the spanning tree and transform the high-level feature.
Different from the encoder, the usage for the decoder is shown in the right diagram of Fig.~\ref{fig:tf_v2}(a). The LTF-V2 module is embedded in the decoder and relies on the corresponding low-level feature in the encoder to generate the spanning tree.

In this paper, we conduct experiments on semantic segmentation and instance-aware tasks, \textit{i.e.}, object detection and instance segmentation.
For semantic segmentation, the ResNet~\cite{he2016deep} is adopted as the backbone with the naive decoder following the LTF-V1.
While for instance-aware tasks, we adopt the Mask R-CNN~\cite{he2017mask} with FPN as the decoder and ResNet~\cite{he2016deep}/ResNeXt~\cite{xie2017aggregated} as the encoder.


\begin{figure*}[t]
\centering
\includegraphics[width=0.8\linewidth]{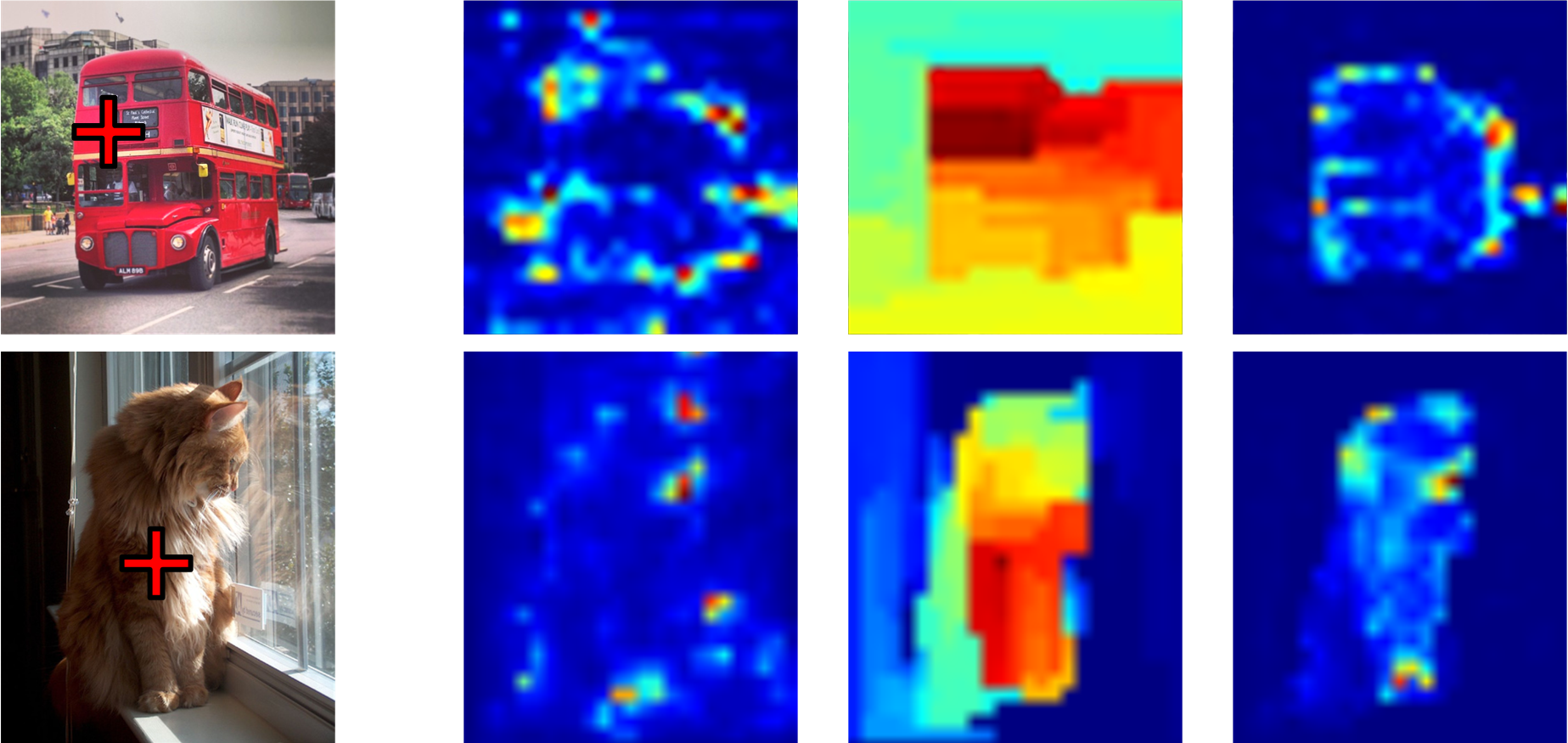}
\caption{Visualization of the affinity maps in specific positions (marked by the \textit{red} cross in each image). The affinity maps (from left to right) are generated from the {\em unary term}, the {\em pairwise term} and the {\em LTF-V2}, respectively.
}
\label{fig:coco_vis}
\end{figure*}

\section{Experiments}
To demonstrate the effectiveness of the proposed method, several experiments are conducted on \textit{COCO}~\cite{lin2014microsoft} for object detection/instance segmentation (instance-aware tasks) and \textit{Cityscapes}~\cite{cordts2016cityscapes} for semantic segmentation.


\subsection{Experiments in Instance-aware Tasks}

\subsubsection{Training Setting}
Following the configuration of Mask R-CNN in the Detectron2~\cite{wu2019detectron2} framework, we employ the FPN based decoder and a pair of 4-convolution heads for bounding box regression and mask prediction, respectively.
All the backbones are pre-trained on the ImageNet classification dataset~\cite{deng2009imagenet} unless otherwise specified.
In the training phase, input images are resized so that the shorter side is 800 pixels.
All the hyper-parameters are identical to the 1x schedule in the Detectron2 framework.


\subsubsection{Ablation Study}


\textbf{Components of the LTF-V2.}
Intuitively, as illustrated in Fig.~\ref{fig:coco_vis}, the properties of unary and pairwise terms are complementary. The unary term can guide the LTF-V2 to focus on the coarsely distant regions, and the pairwise term further refines the regions to fit the detailed structures. Furthermore, we give the qualitative comparisons on Tab.~\ref{tab:det_terms}, which demonstrates the balance of the two terms can work better than the single one. To reveal the impact of the data-dependent spanning tree, we adopt the uniformly random spanning tree~\cite{gross2005graph} for the pairwise term by default.

\begin{table*}[t!]
\centering
\caption{Comparisons among related works and the LTF-V2 module on different stages for COCO 2017 \textit{val} set. All experiments are conducted on the Mask-RCNN framework with 1x learning rate schedule. 
GCNet~\cite{cao2019gcnet} is used after each bottleneck, and other blocks are applied at the end of specific stages in the backbone.
The settings of all the models are configured as the original papers.}
\resizebox{\linewidth}{24mm}{
\begin{tabular}{lccccccccc}
\toprule
Model & Stage & $\mathrm{AP}_{box}$ & $\mathrm{AP}_{box}^{50}$ & $\mathrm{AP}_{box}^{75}$ & $\mathrm{AP}_{seg}$ & $\mathrm{AP}_{seg}^{50}$ & $\mathrm{AP}_{seg}^{75}$ & \#FLOPs & \#Params \\
\midrule
ResNet-50 (1x)  & -         & 38.8 & 58.7 & 42.4 & 35.2 & 55.6 & 37.6 & 279.4B & 44.4M \\ \midrule
+Non-Local~\cite{wang2018non} & 4    & 39.5 & 59.6 & 42.7 & 35.6 & 56.7 & 37.6 & +10.67B & +2.09M \\
+CCNet~\cite{huang2019ccnet}  & 345  & 40.1 & 60.4 & 44.1 & 36.0 & 57.4 & 38.4 & +16.62B & +6.88M \\
+LatentGNN~\cite{zhang2019latentgnn}      & 345  & 40.6 & 61.3 & 44.5 & 36.6
& 58.1 & 39.2 & +3.59B & +1.07M \\
+GCNet~\cite{cao2019gcnet} & All  & 40.7 & 61.0 & 44.2 & 36.7
& 58.1 & 39.2 & +0.35B & +10.0M \\
\midrule
+LTF-V1 & 345  & 40.0 & 60.4 & 43.7 & 36.1 & 57.5 & 38.4 & +0.31B & +0.06M \\ 
\midrule
\multirow{4}*{+LTF-V2}     & 3    & 40.1 & 59.9 & 43.9 & 36.0 & 57.1 & 38.3 & +0.43B & +0.02M \\
                ~         & 4    & 40.6 & 61.0 & 44.4 & 36.6 & 58.2 & 39.0 & +0.26B & +0.04M \\
                ~          & 5    & 40.2 & 60.5 & 43.6 & 36.1 & 57.5 & 38.4 & +0.17B & +0.08M  \\
                ~          & 345  & \textbf{41.2} & \textbf{61.6} & \textbf{45.2} & \textbf{37.0} & \textbf{58.4} & \textbf{39.5} & +0.68B & +0.14M \\
\bottomrule
\end{tabular}}
\label{tab:det_stage}
\end{table*}
\begin{table}[t]
\begin{minipage}{0.45\linewidth}
\centering
\caption[t]{Comparison of different spanning tree algorithms for the LTF-V2 module on COCO 2017 \textit{val} set. \textbf{MST} and \textbf{LST} are the minimum spanning tree algorithm and the proposed learnable spanning tree algorithm, respectively. \textbf{Pretrain} indicates the pre-trained weights for the backbone.}
\vspace{6pt}
\resizebox{62mm}{11mm}{
\begin{tabular}{lcccc}
\toprule
Pretrain & MST & LST & $\mathrm{AP}_{box}$ & $\mathrm{AP}_{seg}$ \\
\midrule
\multirow{2}*{Scratch} & \cmark & \xmark & 29.4 & 26.7 \\
                    ~ & \xmark & \cmark & \textbf{30.3}& \textbf{27.6} \\
\midrule
\multirow{2}*{ImageNet~\cite{deng2009imagenet}} & \cmark & \xmark & 40.9 & 36.8\\
                    ~ & \xmark & \cmark & \textbf{41.2} & \textbf{37.0} \\
\bottomrule
\end{tabular}}
\label{tab:det_lst}
\end{minipage}
\quad
\begin{minipage}{0.5\linewidth}
\caption{Comparisons among different backbones for the Mask-RCNN framework on COCO 2017 \textit{val} set.}
\centering
\resizebox{63mm}{20mm}{
\begin{tabular}{lcc}
\toprule
Model & $\mathrm{AP}_{box}$ & $\mathrm{AP}_{seg}$ \\
\midrule
ResNet-101 (1x) & 40.7 & 36.6 \\ 
+LTF-V1          & 41.6 & 37.3 \\
+LTF-V2          & \textbf{42.5} & \textbf{38.0} \\
\midrule
ResNeXt-101 (1x) & 43.0 & 38.3 \\ 
+LTF-V1           & 43.8 & 39.0 \\
+LTF-V2           & \textbf{44.5} & \textbf{39.7} \\
\midrule
ResNeXt-101 + Cascade (1x) & 45.5 & 39.3 \\ 
+LTF-V1           & 46.2 & 40.0 \\
+LTF-V2           & \textbf{46.9} & \textbf{40.4} \\
\bottomrule
\end{tabular}}
\label{tab:det_backbones}
\end{minipage}
\end{table}

\textbf{Stages.}
We explore the effect when inserting the LTF-V2 module after the last layer of different stages. 
The performance in stage 4 is better than in stage 3 and stage 5, which reflects the advantage of using both semantic context and detailed structure.
Moreover, due to the efficiency of our proposed framework, we can further insert the LTF-V2 module into multiple stages to fuse multi-scale features.
In this way, as shown in Tab.~\ref{tab:det_stage}, our method achieves 41.2\% on $\rm{AP}^{box}$ and 37.0\% on $\rm{AP}^{seg}$, which has \textbf{2.4}\% and \textbf{1.8}\% absolute gains over the baseline on $\rm{AP}^{box}$ and $\rm{AP}^{seg}$, respectively.

\textbf{Spanning tree algorithm.}
To evaluate the proposed learnable spanning tree algorithm, we conduct ablation studies with the original minimum spanning tree algorithm.
As shown in Tab.~\ref{tab:det_lst}, the results demonstrate the effectiveness of the proposed learnable spanning tree algorithm, which achieves consistent improvements on both localization and segmentation.
Especially {\em without} pre-training, the improvement is more prominent, which indicates that the proposed algorithm improves the robustness to alleviate the adverse impact of random initialization.

\textbf{Stronger backbones.}
To further validate the effectiveness, we evaluate the proposed LTF-V2 modules on stronger backbones.
As shown in Tab.~\ref{tab:det_backbones}, we adopt ResNet-101 or ResNeXt-101 as the backbone.
Following the same strategy, we insert the learnable tree filter modules at the end of stage3, stage4, and stage5, respectively.
The LTF-V2 module still achieves noticeable performance gains over stronger baselines.
Specifically, when using cascade strategy~\cite{cai2018cascade} and ResNeXt-101 as the backbone, we achieve 1.4\% and 1.1\% absolute gains over the baseline for $\rm{AP}^{box}$ and $\rm{AP}^{seg}$, respectively.

\subsection{Experiments on Semantic Segmentation}
\subsubsection{Training Setting}
For semantic segmentation, training images are randomly resized by 0.5 to 2.0$\times$ and cropped to 1024$\times$1024. 
The random flipping horizontally is applied for data augmentation.
Furthermore, we employ 8 GPUs for training, and the effective mini-batch size is 16.
Following conventional protocols~\cite{yu2018bisenet, yu2018learning, song2019learnable}, we set the initial learning rate to 0.01 and employ the ``poly" schedule (\textit{i.e.}, multiply $(1 - \frac{iter}{max\_iter})^{power}$ for each iteration) with $\mathrm{power}=0.9$.
All models are optimized by using synchronized SGD with the weight decay of 0.0001 and the momentum of 0.9.
For fair comparisons, we adopt the LTF-V2 modules in each stage of the decoder as the LTF-V1 does.

\subsubsection{Ablation Study}
As shown in Tab.~\ref{tab:seg_ablation}, we give quantitative comparisons between the LTF-V1 module and the LTF-V2 module.
The results show that both yield significant improvement over the baseline.
Specifically, without data augmentation, the LTF-V2 module achieves \textbf{4.5}\% and \textbf{1.5}\% absolute gains on mIoU over the baseline and the LTF-V1 module, respectively.
Moreover, we adopt multi-scale and flipping augmentations for testing.
The performance of the LTF-V2 module is further improved, which still attains \textbf{3.4}\% and \textbf{1.7}\% absolute gains on mIoU over the baseline and the LTF-V1 module, respectively.

\begin{table}[t!]
\centering
\caption{Comparisons among different components of the LTF-V2 module on COCO 2017 \textit{val} set. \textbf{Unary} and \textbf{Pairwise} are the components of the proposed MRF, where the pairwise term adopts the uniformly random spanning tree by default. \textbf{LST} is the proposed learnable spanning tree algorithm.}
\resizebox{\linewidth}{16mm}{
\begin{tabular}{lccccccccc}
\toprule
Model & Unary & Pairwise & LST & $\mathrm{AP}_{box}$ & $\mathrm{AP}_{box}^{50}$ & $\mathrm{AP}_{box}^{75}$ & $\mathrm{AP}_{seg}$ & $\mathrm{AP}_{seg}^{50}$ & $\mathrm{AP}_{seg}^{75}$  \\
\midrule
\multirow{6}*{ResNet-50 (1x)}  & \xmark & \xmark & \xmark & 38.8 & 58.7 & 42.4 & 35.2 & 55.6 & 37.6 \\
~ & \cmark & \xmark & \xmark & 39.7 & 59.7 & 43.5 & 35.8 & 56.8 & 38.1 \\
~ & \xmark & \cmark & \xmark & 39.6 & 59.5 & 43.3 & 35.7 & 56.6 & 37.9 \\
~ & \cmark & \cmark & \xmark & 40.7 & 61.2 & 44.8 & 36.7 & 58.2 & 39.3 \\
~ & \xmark & \cmark & \cmark & 40.2 & 60.7 & 43.8 & 36.2 & 57.4 & 38.8 \\
~ & \cmark & \cmark & \cmark & \textbf{41.2} & \textbf{61.6} & \textbf{45.2} & \textbf{37.0} & \textbf{58.4} & \textbf{39.5} \\
\bottomrule
\end{tabular}}
\label{tab:det_terms}
\end{table}
\begin{table}[ht]
\begin{minipage}{0.45\linewidth}
\centering
\caption[h]{The ablation studies conducted on Cityscapes \textit{val} set. \textbf{MS} and \textbf{Flip} denote adopting multi-scale and flipping augmentation for testing, respectively.}
\vspace{13pt}
\resizebox{64mm}{15mm}{
\begin{tabular}{lccc}
\toprule
Model & MS\&Flip & mIoU (\%) & mAcc (\%) \\
\midrule
ResNet-50   & \xmark & 72.9 & 95.5 \\ 
+LTF-V1      & \xmark & 75.9 & 95.8 \\
+LTF-V2      & \xmark & \textbf{77.4} & \textbf{96.0} \\
\midrule
ResNet-50   & \cmark & 75.5 & 95.9 \\ 
+LTF-V1      & \cmark & 77.2 & 96.0 \\
+LTF-V2      & \cmark & \textbf{78.9} & \textbf{96.1}\\
\bottomrule
\end{tabular}}
\label{tab:seg_ablation}
\end{minipage}
\quad
\begin{minipage}{0.5\linewidth}
\centering
\caption{Comparisons with state-of-the-art results
on Cityscapes \textit{test} set. Our model is trained with fine annotations only.}
\resizebox{65mm}{19mm}{
\begin{tabular}{lcc}
\toprule
Model & Backbone & mIoU (\%) \\ 
\midrule
PSPNet~\cite{zhao2017pyramid}          & ResNet-101   &  78.4 \\ 
DFN~\cite{yu2018learning}              & ResNet-101   &  79.3 \\ 
DenseASPP~\cite{yang2018denseaspp}     & DenseNet-161 &  80.6 \\ 
LTF-V1~\cite{song2019learnable}         & ResNet-101   &  80.8 \\ 
CCNet~\cite{huang2019ccnet}            & ResNet-101   &  81.4 \\ 
DANet~\cite{fu2019dual}                & ResNet-101   &  81.5 \\
SPNet~\cite{hou2020strip}              & ResNet-101   &  82.0 \\
\midrule
Ours (LTF-V2)                           & ResNet-101   &  \textbf{82.1} \\
\bottomrule
\end{tabular}}
\label{tab:seg_sta}
\end{minipage}
\end{table}

\subsubsection{Comparison with State-of-the-arts}
To further improve the performance, we adopt a global average pooling operation and an additional ResBlocks~\cite{he2016deep} in the decoder, which follows the design in the LTF-V1 module.
Specifically, an extra global average pooling operator is inserted at the end of stage4 in the encoder.
In addition, multiple ResBlocks based on ``Conv3$\times$3" are added before each upsampling operator in the decoder.
Similar with the conventional protocols~\cite{yu2018bisenet, yu2018learning, song2019learnable}, we further finetune our model on both \textit{train} and \textit{val} sets.
As shown in Tab.~\ref{tab:seg_sta}, we achieve \textbf{82.1\%} mIoU result on Cityscapes \textit{test} set, which is superior to other state-of-the-art approaches with ResNet-101 backbone and only fine annotations.

\section{Conclusion}
In this paper, we first rethink the advantages and shortages of the LTF-V1 by reformulating it as a Markov Random Field. Then we present a learnable unary term to relax the geometric constraint and enable effectively long-range interactions.
Besides, we propose the learnable spanning tree algorithm to replace the non-differentiable one for an entirely learnable tree filter.
Extensive ablation studies are conducted to elaborate on the effectiveness and efficiency of the proposed method, which is demonstrated to bring significant improvements on both instance-aware tasks and semantic segmentation with negligible computational overhead. 
We hope that the perspective of Markov Random Field for context modeling can provide insights into future works, and beyond.

\section*{Broader Impact}
Context modeling is a powerful tool to improve the ability for feature representation, which has been widely applied in real-world scenarios, \textit{e.g.}, computer vision and natural language processing. 
The traditional tree filter~\cite{yang2015stereo} already has great impacts on many low-level computer vision tasks, owing to its structure-preserving property and high efficiency. This paper further releases its representation potential by relaxing the geometric constraint.
Specifically, our method provides a new perspective for context modeling by unifying the learnable tree filter with the Markov Random Field, which is further demonstrated to be effective in several vision tasks with negligible computational and parametric overheads.
These properties of our method have great potentials, which allow our method and principle to extend to other complex tasks with large-number nodes, {\em e.g.}, replacing the attention module of transformer for natural language processing and enhancing sequential representation for video analysis. 

\begin{ack}
This research was supported by National Key R\&D Program of China (No. 2017YFA0700800), National Natural Science Foundation of China (No. 61751401) and Beijing Academy of Artificial Intelligence (BAAI).
\end{ack}

{\small
\bibliographystyle{unsrt}
\bibliography{reference.bib}
}

\clearpage
\appendix

\section{Visualization}
In this section, as shown in Fig.~\ref{fig:res_coco} and Fig.~\ref{fig:res_city}, we present several visualization results of the baseline (w/o context block) and the proposed LTF-V2 module for instance-aware tasks and semantic segmentation, respectively.

\begin{figure}[H]
\centering
\includegraphics[width=0.99\textwidth]{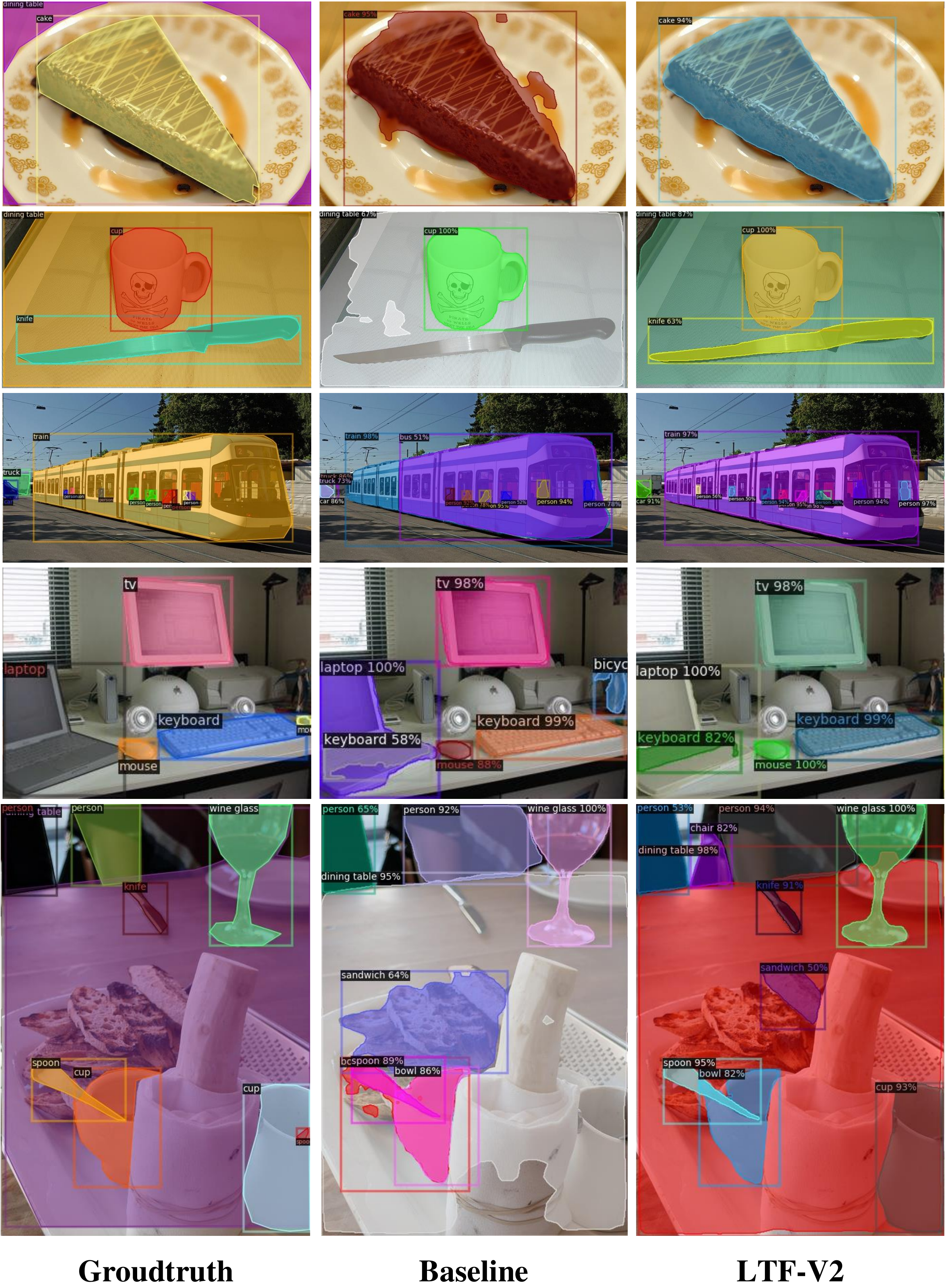}
\caption{Visualization results on COCO 2017 \textit{val} set. We use different colors to distinguish between instances that do \textit{not} represent categories. The results demonstrate the superiority of the LTF-V2 module in terms of both semantics and details}
\label{fig:res_coco}
\end{figure}

\clearpage

\begin{figure}[ht]
\centering
\includegraphics[width=\textwidth]{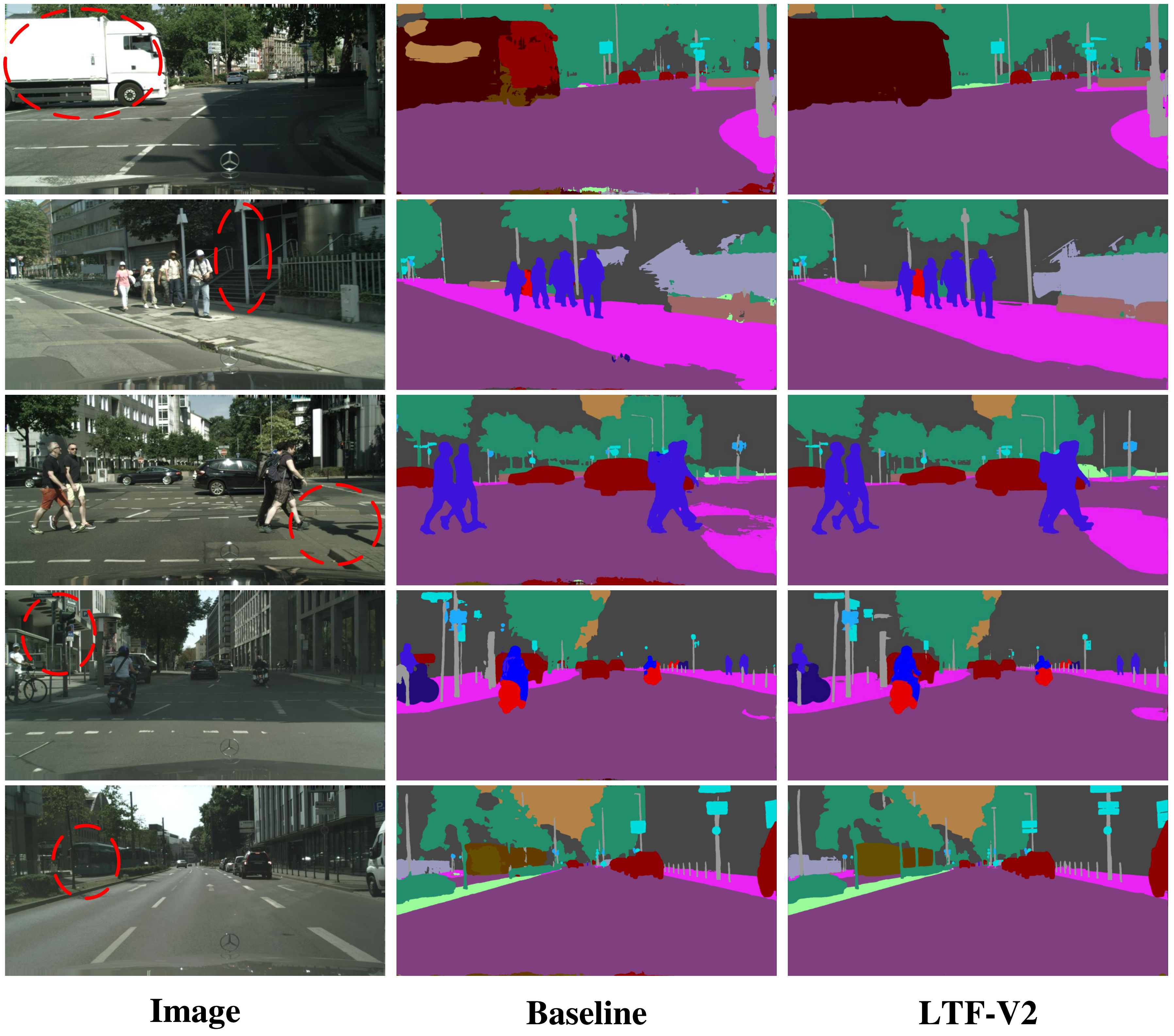}
\caption{Visualization results on Cityscapes \textit{val} set. The red dotted ellipses indicate the area of the major difference between the baseline and the LTF-V2 module}
\label{fig:res_city}
\end{figure}

\section{Algorithm Proof}
In this section, we present the detailed proofs for the claims about the learnable tree filter. Please note that the symbols follow the definitions in the main paper.

\begin{lemma}
Given a Markov Random Field on the tree $\mathcal{G}_T$, whose unary and pairwise terms are the Eq.~\ref{eq_a:phi} and the Eq.~\ref{eq_a:psi} respectively,  
the marginal probability of latent variable satisfies $P_{\mathcal{G}_T}(h_i=j)=\frac{1}{z_i}S_{\mathcal{G}_T}(\mathbf{E}_{j,i})$.
\begin{equation}
\label{eq_a:phi}
    \phi_i(h_i, x_i)\equiv 1,
\end{equation}
\begin{equation}
\label{eq_a:psi}
    \psi_{i,j}(h_i, h_j):=\left\{\begin{matrix}
    \delta(h_i - h_j) & h_i\notin {\rm{Desc}}_{\mathcal{G}_T}(i, j) \\ 
    {\rm{exp}}(-\omega_{i,j}) \delta(h_i - h_j) & h_i\in {\rm{Desc}}_{\mathcal{G}_T}(i, j)
    \end{matrix}\right.
\end{equation}
\end{lemma}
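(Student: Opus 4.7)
The plan is to exploit two structural features of the claimed MRF: $\phi_i$ is trivial, and every pairwise factor carries the hard Kronecker indicator $\delta(h_k - h_m)$. Because $\mathcal{G}_T$ is a connected tree, those $\delta$'s propagate along every edge and force $h_1 = h_2 = \cdots = h_N$ in every configuration of nonzero weight. Hence for any query value $j \in \mathcal{V}$, the marginal collapses to the single all-$j$ configuration,
\begin{equation*}
P_{\mathcal{G}_T}(h_i = j) = \frac{1}{Z}\prod_{(k,m)\in\mathcal{E}}\psi_{k,m}(j,j),
\end{equation*}
where $Z$ is the global partition function.

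Next I would evaluate the product edge by edge. Root $\mathcal{G}_T$ at the query node $i$ and, for each tree edge, take the first argument of $\psi$ to be the endpoint $p$ closer to $i$ (the parent) and the second to be the endpoint $c$ farther from $i$ (the child). Under this rooting, $\mathrm{Desc}_{\mathcal{G}_T}(p,c)$ is exactly the connected component containing $c$ after deleting edge $(p,c)$, i.e.\ the ``far side'' from $i$. Consequently the case test $j \in \mathrm{Desc}_{\mathcal{G}_T}(p,c)$ is equivalent to the membership $(p,c) \in E_{j,i}$, since the unique tree path from $j$ to $i$ crosses edge $(p,c)$ precisely when $j$ lies on the $c$-side. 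Therefore $\psi_{p,c}(j,j) = \exp(-\omega_{p,c})$ for every edge on the path $E_{j,i}$, and $\psi_{p,c}(j,j) = 1$ for all other edges.

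Multiplying these factors telescopes to $\prod_{(k,m)\in\mathcal{E}}\psi_{k,m}(j,j) = \exp\!\bigl(-\sum_{(k,m)\in E_{j,i}}\omega_{k,m}\bigr) = S_{\mathcal{G}_T}(E_{j,i})$, so $P_{\mathcal{G}_T}(h_i = j) = S_{\mathcal{G}_T}(E_{j,i})/Z$; identifying $z_i := Z$ recovers the lemma, and the normalization $\sum_{j\in\mathcal{V}} P_{\mathcal{G}_T}(h_i = j) = 1$ automatically fixes $z_i = \sum_{j \in \mathcal{V}} S_{\mathcal{G}_T}(E_{j,i})$, matching the coefficient in Eq.~\ref{eq:tf}. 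The main obstacle I anticipate is that $\psi$ is written in a superficially asymmetric form through $\mathrm{Desc}_{\mathcal{G}_T}(\cdot,\cdot)$, whereas the MRF itself lives on an undirected tree. The resolution is to interpret the pairwise term relative to the query node $i$: rooting at $i$ gives each edge a canonical parent/child orientation, and the descendant test then reduces to the concrete path-membership condition above. Once that bookkeeping is made explicit, the argument is essentially a one-line consequence of the acyclicity of $\mathcal{G}_T$.
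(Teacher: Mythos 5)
Your proof is correct under the same reading of the pairwise terms that the paper itself uses, but it takes a genuinely different route. The paper runs the sum--product (belief propagation) recursion: it writes $P_{\mathcal{G}_T}(h_i)=\frac{1}{z_i}\phi_i\prod_{j\in\mathcal{N}_i}m_{j,i}(h_i)$, evaluates the messages case by case for $h_i=i$ and $h_i=u\neq i$, and telescopes the chain of $\exp(-\omega)$ factors along the path. You instead observe that every pairwise factor carries a hard $\delta(h_k-h_m)$ and the tree is connected, so the joint is supported only on the $N$ constant configurations; the marginal then reduces to a direct evaluation of $\prod_e\psi_e(j,j)$, with exactly the edges of $E_{j,i}$ contributing $\exp(-\omega)$ under the root-at-$i$ orientation. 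Your argument is shorter and arguably more transparent about what the distribution actually is (a categorical distribution over ``which node everyone agrees on''), whereas the paper's message-passing derivation generalizes directly to Lemma~3, where the nonconstant unary term breaks the support-collapse argument and your method would need the extra observation that the unary factors also just multiply onto each constant configuration. One caveat you have surfaced (and the paper glosses over): because the support collapses to constant configurations, a \emph{single} fixed factorization with a fixed edge orientation would make $P_{\mathcal{G}_T}(h_i=j)$ equal to the probability of the all-$j$ configuration and hence independent of the query node $i$, which is incompatible with $\frac{1}{z_i}S_{\mathcal{G}_T}(\mathbf{E}_{j,i})$ varying with $i$. So the ``root at the query node and orient each $\psi$ parent-to-child'' step is not innocent bookkeeping: it means the lemma really concerns a family of root-indexed distributions, one per $i$. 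The paper's belief-propagation proof makes exactly the same implicit choice (it evaluates $\psi_{j,i}$ with the sender--receiver orientation determined by the root $i$), so this is a defect of the formulation rather than of your proof; you should just state explicitly that the MRF of Eq.~\ref{eq_a:psi} is instantiated separately for each root $i$, after which your argument is complete.
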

\begin{proof}
To obtain the marginal probability of the Markov Random Field on an acyclic graph, we adopt the belief propagation algorithm as shown on Eq.~\ref{eq_a:b_i}, where $\mathcal{N}_i$ denotes the set of adjacent nodes of node $i$ in the tree. 
\begin{equation}
\begin{aligned}
\label{eq_a:b_i}
    P_{\mathcal{G}_T}(h_i)&=\frac{1}{z_i}\phi_i(h_i, x_i)\prod_{\forall j\in \mathcal{N}_i}m_{j,i}(h_i),\\
    m_{j,i}(h_i)&=\sum_{\forall h_j\in \mathcal{V}}{\phi_j(h_j, x_j)\psi_{j, i}(h_j, h_i)\prod_{\forall k\in \mathcal{N}_j\setminus i}m_{k, j}(h_j).}
\end{aligned}
\end{equation}
For $h_i=i$, the marginal probability is
\begin{equation}
\begin{aligned}
    &P_{\mathcal{G}_T}(h_i=i)=\frac{1}{z_i}\prod_{\forall j\in \mathcal{N}_i}m_{j,i}(i)\\
    &=\frac{1}{z_i}\prod_{\forall j\in \mathcal{N}_i}({\sum_{\forall h_j\in \mathcal{V}}{\delta(h_j - i)\prod_{\forall k\in \mathcal{N}_j\setminus i}m_{k, j}(h_j)}})\\
    &=\frac{1}{z_i}\prod_{\forall j\in \mathcal{N}_i}{\prod_{\forall k\in \mathcal{N}_j\setminus i}m_{k, j}(i)}\\
    &=\frac{1}{z_i}\prod_{\forall j\in \mathcal{N}_i}{\prod_{\forall k\in \mathcal{N}_j\setminus i}\prod_{\forall u\in \mathcal{N}_k\setminus k}\cdots 1}\\
    &=\frac{1}{z_i}.
\end{aligned}
\end{equation}
For $h_i=u$ and $u\neq i$, the marginal probability is
\begin{equation}
\begin{aligned}
    &P_{\mathcal{G}_T}(h_i=u)=\frac{1}{z_i}\prod_{\forall j\in \mathcal{N}_i \cap \{p\mid x_u\in {\rm Desc}(i,p)\}}m_{j,i}(u)\\
    &=\frac{1}{z_i}{\sum_{\forall h_j\in \mathcal{V}}{\psi_{j, i}(h_j, u)\prod_{\forall k\in \mathcal{N}_j\setminus i}m_{k, j}(h_j)}}\\
    &=\frac{1}{z_i}{\sum_{\forall h_j\in \mathcal{V}}{\rm exp}(-\omega_{j, i}){\delta(h_j - u)\prod_{\forall k\in \mathcal{N}_j\setminus i}m_{k, j}(h_j)}}\\
    &=\frac{1}{z_i}{\rm exp}(-\omega_{j, i})\prod_{\forall k\in \mathcal{N}_j\setminus i}m_{k, j}(u)\\
    &=\frac{1}{z_i}{\rm exp}(-\omega_{j, i}){\rm exp}(-\omega_{k, j})\cdots \prod_{\forall v\in \mathcal{N}_u\setminus par(u)}m_{v, u}(u)\\
    &=\frac{1}{z_i}{\rm exp}(-\omega_{j, i}){\rm exp}(-\omega_{k, j})\cdots {\rm exp}(-\omega_{u, par(u)})\\
    &=\frac{1}{z_i}\prod_{\forall (k,m)\in \mathbf{E}_{j,i}}{{\rm exp}(-\omega_{k, m})}.
\end{aligned}
\end{equation}
Therefore, $P_{\mathcal{G}_T}(h_i=j)=\frac{1}{z_i}S_{\mathcal{G}_T}(\mathbf{E}_{j,i})$.

\end{proof}

\begin{lemma}
Given a Markov Random Field on the tree $\mathcal{G}_T$, whose unary and pairwise terms are the Eq.~\ref{eq_a:phi} and the Eq.~\ref{eq_a:psi}, respectively. When denoting node $i$ as the root of the tree $\mathcal{G}_T$ and node $v$ (distant) as one of the descendants node of node $u$ (nearby), the marginal probability of latent variable satisfies $P_{\mathcal{G}_T}(h_i=v)\leq  P_{\mathcal{G}_T}(h_i=u)$.
\end{lemma}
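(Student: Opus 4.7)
The plan is to reduce this statement directly to Lemma~1, which already provides a closed form for the marginal $P_{\mathcal{G}_T}(h_i=j) = \frac{1}{z_i} S_{\mathcal{G}_T}(E_{j,i}) = \frac{1}{z_i}\exp\!\bigl(-\sum_{(k,m)\in E_{j,i}} \omega_{k,m}\bigr)$. Since both $P_{\mathcal{G}_T}(h_i=u)$ and $P_{\mathcal{G}_T}(h_i=v)$ share the same normalizing constant $z_i$, the inequality reduces to a comparison between two products of edge weights along the tree paths $E_{u,i}$ and $E_{v,i}$.

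The key structural step is to observe that, when $i$ is taken as the root of $\mathcal{G}_T$ and $v$ lies in the subtree rooted at $u$, the unique tree path from $v$ to $i$ must pass through $u$. Hence $E_{v,i} = E_{v,u} \cup E_{u,i}$ with the two subpaths edge-disjoint. Consequently $\sum_{(k,m)\in E_{v,i}} \omega_{k,m} = \sum_{(k,m)\in E_{u,i}} \omega_{k,m} + \sum_{(k,m)\in E_{v,u}} \omega_{k,m}$.

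Next, I would invoke the non-negativity of the edge weights: under the paper's instantiation $\omega_{k,m}=|x_k-x_m|^2$ (or more generally any Euclidean-type distance used for edge weights), every $\omega_{k,m}\ge 0$, so $\sum_{(k,m)\in E_{v,u}}\omega_{k,m}\ge 0$. Since $\exp(-t)$ is monotonically decreasing, this yields $S_{\mathcal{G}_T}(E_{v,i})\le S_{\mathcal{G}_T}(E_{u,i})$, and dividing by the common $z_i>0$ gives the claim $P_{\mathcal{G}_T}(h_i=v)\le P_{\mathcal{G}_T}(h_i=u)$.

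The only subtle point, and the one I would be most careful to state explicitly, is the path-containment fact $E_{u,i}\subseteq E_{v,i}$: this is exactly where the descendant hypothesis is used, and it relies on the acyclicity of $\mathcal{G}_T$ so that tree paths are unique. Beyond this observation the argument is routine, and no appeal to belief propagation is required a second time since Lemma~1 has already done that work. If one wanted to make the monotonicity strict, it would suffice to additionally assume that at least one edge in $E_{v,u}$ has strictly positive weight, but the weak inequality stated in the lemma follows from non-negativity alone.
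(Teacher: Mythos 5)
Your proposal is correct and follows essentially the same route as the paper's own proof: both invoke the closed form $P_{\mathcal{G}_T}(h_i=j)=\frac{1}{z_i}S_{\mathcal{G}_T}(\mathbf{E}_{j,i})$ from Lemma~1, decompose the path as $S_{\mathcal{G}_T}(\mathbf{E}_{v,i})=S_{\mathcal{G}_T}(\mathbf{E}_{u,i})S_{\mathcal{G}_T}(\mathbf{E}_{v,u})$ using the descendant hypothesis, and conclude from $\omega_{k,m}\geq 0$ that the extra factor is at most $1$. Your explicit remark on why the path through $v$ must pass through $u$ is a point the paper leaves implicit, but the argument is the same.
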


\begin{proof}
Since the edge distance satisfies $\omega_{k, m}\geq 0$, the marginal probability of latent variable satisfies
\begin{equation}
\begin{aligned}
    &P_{\mathcal{G}_T}(h_i=v)=\frac{1}{z_i}S_{\mathcal{G}_T}(\mathbf{E}_{v,i})\\
    &=\frac{1}{z_i}S_{\mathcal{G}_T}(\mathbf{E}_{u,i})S_{\mathcal{G}_T}(\mathbf{E}_{v,u})\\
    &=P_{\mathcal{G}_T}(h_i=u)\prod_{\forall (k,m)\in \mathbf{E}_{v,u}}{{\rm exp}(-\omega_{k, m})}\\
    &\leq P_{\mathcal{G}_T}(h_i=u).
\end{aligned}
\end{equation}
\end{proof}

\begin{lemma}
Given a Markov Random Field on the tree $\mathcal{G}_T$, whose unary and pairwise terms are the Eq.~\ref{eq_a:f_x} and the Eq.~\ref{eq_a:psi}, respectively. When denoting node $i$ as the root of the tree $\mathcal{G}_T$ and node $v$ (distant) as one of the descendants of node $u$ (nearby), there exists a specific pair of  $f(\cdot)$ and $\beta$, such that the marginal probability of latent variable satisfies $P_{\mathcal{G}_T}(h_i=v)>  P_{\mathcal{G}_T}(h_i=u)$.
\begin{equation}
\label{eq_a:f_x}
\phi_i(h_i, x_i) := \left\{\begin{matrix}
f(x_i) & h_i=i\\ 
{\rm{exp}}(-\beta) & h_i\neq i
\end{matrix}\right.
\end{equation}
\end{lemma}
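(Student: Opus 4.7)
The plan is to re-run the belief-propagation computation from the proof of Lemma~1 once more, this time with the new unary term from Eq.~\ref{eq_a:f_x}, extract the resulting closed-form marginal, and then exhibit an explicit pair $(f,\beta)$ for which the inequality strictly holds. Existence, rather than universality, is the claim, so after the formula is in hand the rest is just an explicit choice of constants.

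First, I would mirror the recursion in Lemma~1's proof but replace $\phi_j\equiv 1$ with the piecewise formula of Eq.~\ref{eq_a:f_x}. The pairwise factors $\psi_{i,j}$ are unchanged, so their delta functions again collapse the BP messages onto the unique tree-path $E_{c,i}$ from a source $c$ to the sink $i$, exactly as in Lemma~1. The only new bookkeeping is that each intermediate node on this path satisfies $h_j=c\neq j$ and therefore contributes a factor $\exp(-\beta)$, while the endpoint $c$ contributes $f(x_c)$. This reproduces Eq.~10 of the main paper in the form
\begin{equation*}
P_{\mathcal{G}_T}(h_i=c) \;=\; \tfrac{1}{z_i}\, f(x_c)\, \exp(-\beta)^{|E_{c,i}|}\, S_{\mathcal{G}_T}(E_{c,i}).
\end{equation*}

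Next, I use the ancestor hypothesis: since $v$ is a descendant of $u$ in $\mathcal{G}_T$ rooted at $i$, the path $E_{v,i}$ is the concatenation of $E_{u,i}$ with $E_{v,u}$, so $|E_{v,i}|=|E_{u,i}|+|E_{v,u}|$ and $S_{\mathcal{G}_T}(E_{v,i}) = S_{\mathcal{G}_T}(E_{u,i})\cdot S_{\mathcal{G}_T}(E_{v,u})$. Forming the ratio of the two marginals cancels every factor attached to the common prefix $E_{u,i}$ and leaves
\begin{equation*}
\frac{P_{\mathcal{G}_T}(h_i=v)}{P_{\mathcal{G}_T}(h_i=u)} \;=\; \frac{f(x_v)}{f(x_u)}\,\exp\!\Bigl(-\beta\,|E_{v,u}|\;-\!\!\!\sum_{(k,m)\in E_{v,u}}\!\!\!\omega_{k,m}\Bigr).
\end{equation*}

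Finally, I would exhibit a witness. Fix any positive $f$ with $f(x_v)=f(x_u)>0$ (e.g.\ $f\equiv 1$) and choose $\beta$ sufficiently negative, for instance any $\beta < -\tfrac{1}{|E_{v,u}|}\sum_{(k,m)\in E_{v,u}}\omega_{k,m}$; the exponential factor then strictly exceeds one, so does the ratio. An equivalent witness is $\beta=0$ together with any $f$ obeying $f(x_v)/f(x_u) > \exp\bigl(\sum_{(k,m)\in E_{v,u}}\omega_{k,m}\bigr)$. Either choice certifies existence. The main obstacle is the first step: carefully tracking how the learnable $\exp(-\beta)$ unary factors interact with the delta-valued pairwise factors so that the correct exponent $|E_{c,i}|$ attaches to $\exp(-\beta)$; once the closed form is established, the ancestor hypothesis and the parameter choice are routine.
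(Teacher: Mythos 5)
Your proposal follows essentially the same route as the paper's proof: derive the closed-form marginal $P_{\mathcal{G}_T}(h_i=c)=\frac{1}{z_i}f(x_c)\,{\rm exp}(-\beta)^{|\mathbf{E}_{c,i}|}S_{\mathcal{G}_T}(\mathbf{E}_{c,i})$ via belief propagation, split the path at $u$, and reduce the claim to the ratio $\frac{f(x_v)}{f(x_u)}\prod_{(k,m)\in \mathbf{E}_{v,u}}{\rm exp}(-\omega_{k,m}-\beta)>1$. You are in fact slightly more complete than the paper, which stops at stating that sufficient condition, whereas you exhibit explicit witnesses (e.g.\ $f\equiv 1$ with $\beta$ sufficiently negative) that certify existence.
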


\begin{proof}
For $h_i=v$, the marginal probability is
\begin{equation}
\begin{aligned}
    &P_{\mathcal{G}_T}(h_i=v)=\frac{1}{z_i}{{\rm{exp}}(-\beta)}^{|\mathbf{E}_{v,i}|}f(x_v)S_{\mathcal{G}_T}(\mathbf{E}_{v,i})\\
    &=\frac{1}{z_i}{{\rm{exp}}(-\beta)}^{|\mathbf{E}_{u,i}|}S_{\mathcal{G}_T}(\mathbf{E}_{u,i})f(x_v){{\rm{exp}}(-\beta)}^{|\mathbf{E}_{v, u}|}S_{\mathcal{G}_T}(\mathbf{E}_{v,u})\\
    &=P_{\mathcal{G}_T}(h_i=u)\frac{f(x_v)}{f(x_u)}\prod_{\forall (k,m)\in \mathbf{E}_{v,u}}{{\rm exp}(-\omega_{k, m}-\beta)}.
\end{aligned}
\end{equation}
When $\frac{f(x_v)}{f(x_u)}\prod_{\forall (k,m)\in \mathbf{E}_{v,u}}{{\rm exp}(-\omega_{k, m}-\beta)} > 1$, the marginal probability of latent variable satisfies $P_{\mathcal{G}_T}(h_i=v) > P_{\mathcal{G}_T}(h_i=u)$.
\end{proof}

\section{Runtime}
Due to trees are acyclic graphs, we can adopt a well-designed dynamic programming algorithm to reduce the computational complexity to linear w.r.t vertex number.
Besides, in order to improve the efficiency on GPU devices, we parallelize the algorithm along with batches, channels, and nodes of the same depth. As shown in Fig.~\ref{fig:benchmark}, the empirical runtime of our method is far less than that of the Non-Local network~[12], which uses an extremely sophisticated tensor library.
We believe that the efficiency of LTF-V2 can be further improved through device-oriented code optimization.

\begin{figure}[ht]
\centering
\includegraphics[width=0.9\textwidth]{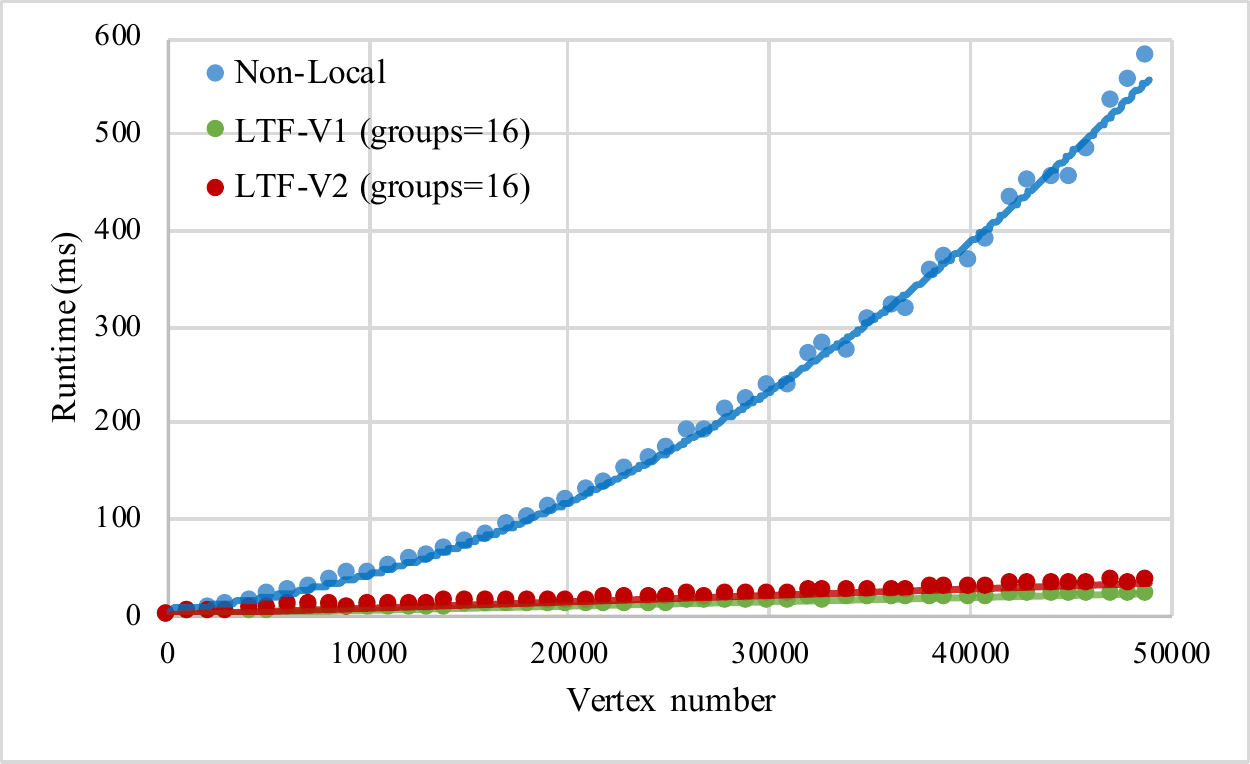}
\caption{Comparisons of the runtime on a Tesla V100 GPU among LTF-V1, LTF-V2 and Non-Local network. The number of feature channels is set to 512}
\label{fig:benchmark}
\end{figure}

\clearpage
\section{Additional Ablation Study}

\begin{table*}[h]
\centering
\caption{Comparisons among different positions and different groups on COCO 2017 \textit{val} set when using ResNet-50 as our backbone. \textbf{Enc} and \textbf{Dec} denote embedding the blocks in the backbone and the FPN, respectively}
\resizebox{\linewidth}{20mm}{
\begin{tabular}{l|c|c|ccc|ccc|cc}
\toprule
Model & Position & Groups & $\mathrm{AP}_{box}$ & $\mathrm{AP}_{box}^{50}$ & $\mathrm{AP}_{box}^{75}$ & $\mathrm{AP}_{seg}$ & $\mathrm{AP}_{seg}^{50}$ & $\mathrm{AP}_{seg}^{75}$ & \#FLOPs & \#Params \\
\midrule
\multirow{3}*{+LTF-V1} & Dec & 16 & 39.6 & 60.0 & 43.4 & 35.8 & 56.9 & 38.0 & +0.45B & +0.02M \\
                    ~ & Enc & 1 & 39.6 & 59.6 & 43.1 & 35.6 & 56.5 & 37.9 & +0.09B & +0.01M\\
                    ~ & Enc & 16 & 40.0 & 60.4 & 43.7 & 36.1 & 57.5 & 38.4 & +0.31B & +0.06M \\
\midrule
\multirow{6}*{+LTF-V2} & Dec & 16 & 39.9 & 60.1 & 43.6 & 36.0 & 57.1 & 38.4 & +1.23B & +0.06M \\
                    ~ & Enc & 1 & 40.2 & 60.5 & 44.0 & 36.3 & 57.5 & 38.8 & +0.13B &  +0.01M\\
                    ~ & Enc & 4 & 40.9 & 61.3 & 44.5 & 36.8 & 58.4 & 39.4 & +0.24B &  +0.04M\\ 
                    ~ & Enc & 8 & 40.9 & 61.4 & 44.3 & 36.8 & 58.4 & 39.2 & +0.39B &  +0.07M\\ 
                    ~ & Enc & 16 & \textbf{41.2} & \textbf{61.6} & \textbf{45.2} & \textbf{37.0} & 58.4 & \textbf{39.5} & +0.68B & +0.14M \\
                    ~ & Enc & 32 & 41.1 & 61.6 & 44.8 & 37.0 & \textbf{58.7} & 39.4 &  +1.28B & +0.29M \\
\bottomrule
\end{tabular}}
\label{tab:det_pos_group}
\end{table*}
\begin{figure}[H]
\subfigure[Affinity maps of LTF-V1 (top) and LTF-V2 (bottom) with different groups]{
    \includegraphics[width=0.9\linewidth]{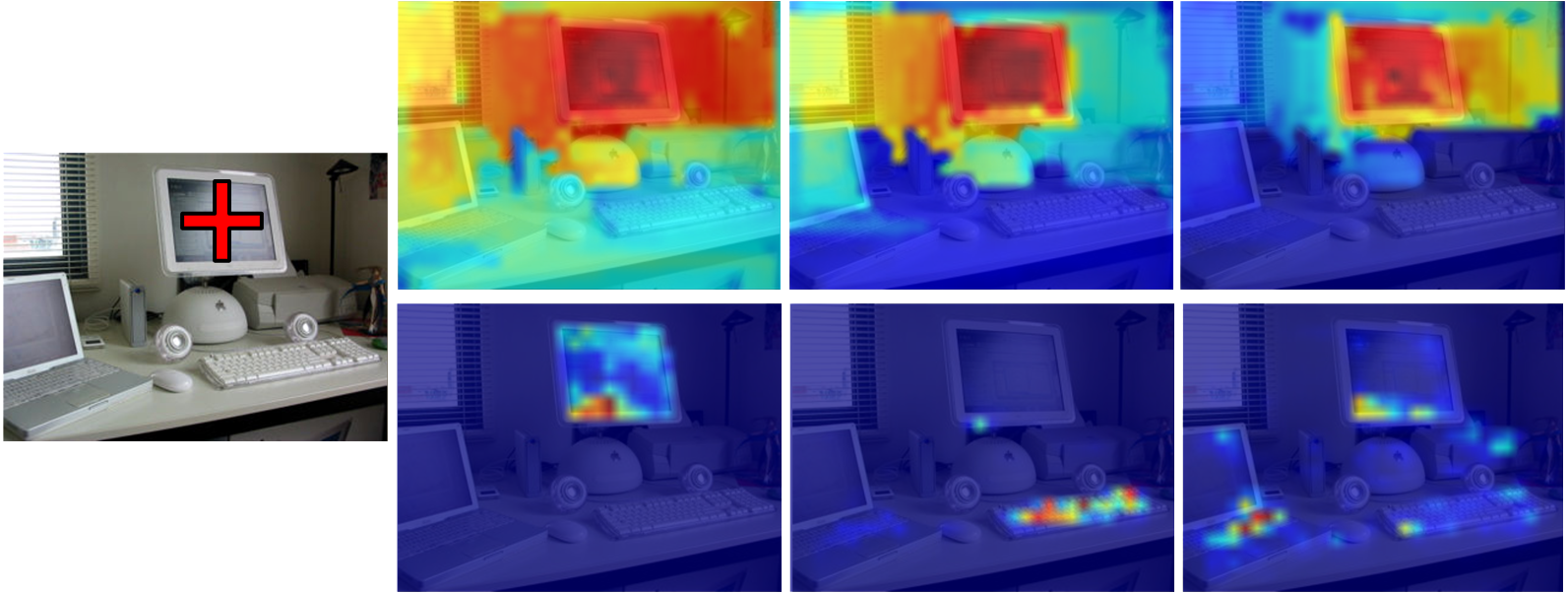}
}\\
\subfigure[Affinity maps of LTF-V2 for different instances]{
    \includegraphics[width=0.9\linewidth]{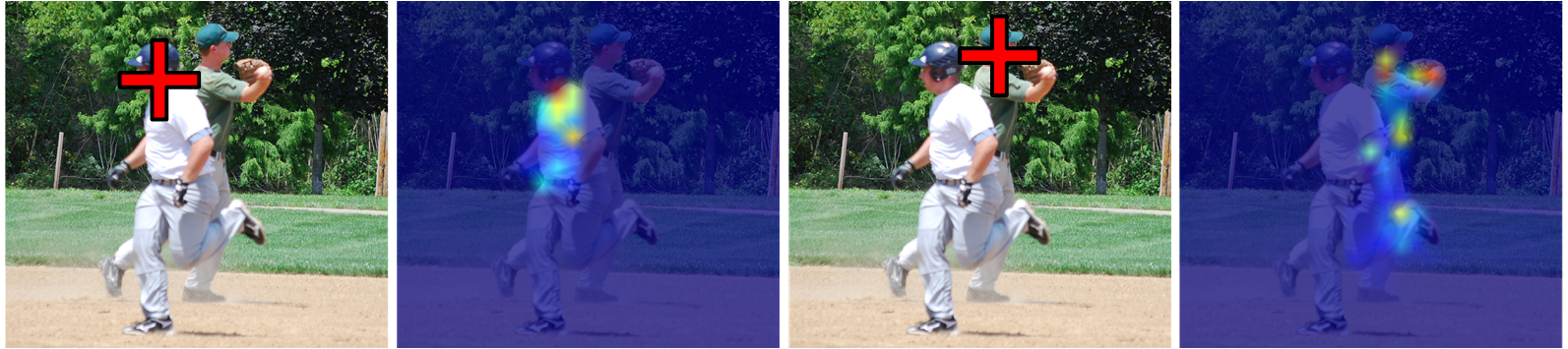}
}\\
\subfigure[Affinity maps of LTF-V2 in different stages]{
    \includegraphics[width=0.9\linewidth]{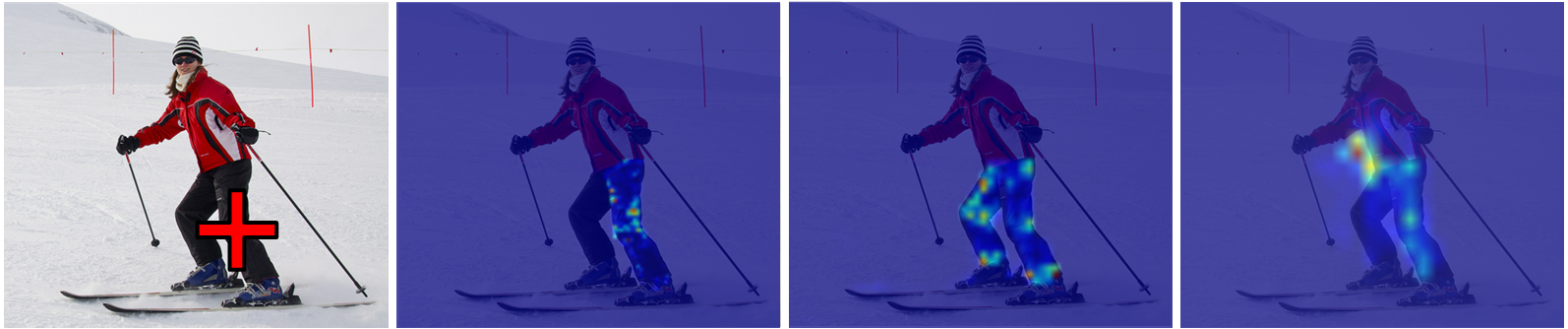}
}\\
\centering
\caption{Visualization of the affinity maps in specific positions (marked by the red cross in each image). The responses of different groups, instances, and stages are illustrated in (a), (b), and (c), respectively. The figure (a) intuitively shows that the diversity of the LTF-V2 module is significantly higher than that of the LTF-V1 module due to the relaxation of geometric constraints. The figure (b) reveals that the LTF-V2 module is aware of the structure of instance, even when the instances of the same category are overlapping. Besides, the heat maps (from left to right) in (c) correspond to stage3, stage4, and stage5 in the encoder, respectively. }
\label{fig_a:coco_vis}
\end{figure}

\end{document}